\def\eqref#1{Equation~\ref{#1}}			
\def\figref#1{Figure~\ref{#1}}			
\def\algref#1{Algorithm~\ref{#1}}		
\def\dffref#1{Definition~\ref{#1}}		
\def\lemref#1{Lemma~\ref{#1}}		
\newtheorem{dff}{Definition}
\newtheorem{thm}{Theorem}
\newtheorem{lem}{Lemma}
\newcommand{\nodes}{\mathbf{V}}
\newcommand{\obs}{\mathbf{O}}
\newcommand{\lat}{\mathbf{L}}
\newcommand{\sel}{\mathbf{S}}
\newcommand{\cdag}{\mathcal{D}}
\newcommand{\cdagset}[3]{\cdag(#1, #2, #3)}
\newcommand{\cdagseti}[4]{\cdag_{#4}(#1, #2, #3)}
\newcommand{\pag}{\mathcal{G}}
\DeclareMathOperator{\MAG}{\mathcal{M}}
\DeclareMathOperator{\DSepop}{\mathbf{D-Sep}}
\DeclareMathOperator{\PDSepop}{\mathbf{Possible-D-Sep}}
\DeclareMathOperator{\PDSeppathop}{\PDSepop_{path}}
\newcommand{\DSEPset}[2]{\DSepop(#1, #2)}
\newcommand{\PDSEPset}[2]{\PDSepop(#1, #2)}
\newcommand{\PDSPath}[3]{\Pi_{#2}(#1, #3)}
\DeclareMathOperator{\Edges}{Edges}
\DeclareMathOperator{\parentsop}{\mathbf{Parents}}
\newcommand{\parents}[2]{\parentsop_{#2}(#1)}
\DeclareMathOperator{\sindep}{Ind}  
\newcommand*{\indep}{%
  \mathbin{%
    \mathpalette{\@indep}{}%
  }%
}
\newcommand*{\nindep}{%
  \mathbin{
    \mathpalette{\@indep}{\not}
  }%
}
\newcommand*{\@indep}[2]{%
  \sbox0{$#1\perp\m@th$}
  \sbox2{$#1=$}
  \sbox4{$#1\vcenter{}$}
  \rlap{\copy0}
  \dimen@=\dimexpr\ht2-\ht4-.2pt\relax
  \kern\dimen@
  {#2}%
  \kern\dimen@
  \copy0 
} 
\title{A Single Iterative Step for Anytime Causal Discovery}
\author{%
  Raanan Y.~Rohekar
    \\
  Intel Labs\\
  \texttt{raanan.yehezkel@intel.com} \\
  \And
  Yaniv Gurwicz \\
  Intel Labs \\
  \texttt{yaniv.gurwicz@intel.com} \\
  \And
  Shami Nisimov \\
  Intel Labs \\
  \texttt{shami.nisimov@intel.com}
  \And
  Gal Novik \\
  Intel Labs \\
  \texttt{gal.novik@intel.com}\\
}
\begin{document}

\maketitle

\begin{abstract}
  We present a sound and complete algorithm for recovering causal graphs from observed, non-interventional data, in the possible presence of latent confounders and selection bias. We rely on the causal Markov and faithfulness assumptions and recover the equivalence class of the underlying causal graph by performing a series of conditional independence (CI) tests between observed variables. We propose a single step that is applied iteratively, such that the independence and causal relations entailed from the resulting graph, after any iteration, is correct and becomes more informative with successive iteration. Essentially, we tie the size of the CI condition set to its distance from the tested nodes on the resulting graph. Each iteration refines the skeleton and orientation by performing CI tests having condition sets that are larger than in the preceding iteration. In an iteration, condition sets of CI tests are constructed from nodes that are within a specified search distance, and the sizes of these condition sets is equal to this search distance. The algorithm then iteratively increases the search distance along with the condition set sizes. Thus, each iteration refines a graph, that was recovered by previous iterations having smaller condition sets---having a higher statistical power. We demonstrate that our algorithm requires significantly fewer CI tests and smaller condition sets compared to the FCI algorithm. This is evident for both recovering the true underlying graph using a perfect CI oracle, and accurately estimating the graph using limited observed data.
\end{abstract}

\section{Introduction}
Causality plays an important role in many sciences, such as social sciences, epidemiology, and finance. \citep{pearl2010introduction,spirtes2010introduction}.
Understanding the underlying mechanisms is crucial for tasks such as explaining a phenomenon, predicting, and decision making.
\citet{pearl2009causality} provided a machinery for automating the process of answering interventional and (retrospective) counterfactual queries even when only observed data is available, and determining if a query cannot be answered given the available data type (identifiability). This requires a knowledge about the true underlying causal structure;
however, in many real-world situations, this structure is unknown. There is a large body of literature on recovering causal relations from observed data---causal discovery \citep{spirtes2000, peters2017elements, rohekar2018bayesian, yehezkel2009rai, cooper1992bayesian, chickering2002optimal, shimizu2006linear, hoyer2009nonlinear}, differing in the assumptions upon which they rely.
In this work we assume a directed acyclic graph (DAG) for the underlying causal structure and focus on learning it from observational data. Furthermore, we assume the causal Markov and faithfulness assumptions, and consider recovering the structure by performing a series of conditional independence (CI) tests \citep{spirtes2000}. In this setting the true DAG is statistically indistinguishable from many other DAGs. Moreover, when considering the possible presence of latent confounders and selection bias (no causal sufficiency), the true DAG cannot be recovered. Instead, \citet{richardson2002ancestral} proposed the Maximal ancestral graph (MAG), which represents independence relations among observed variables, and the partial ancestral graph (PAG), which is a Markov equivalence class of MAGs---a set of MAGs that cannot be ruled out given the observed  independence relations. 

Recently, causal identification was demonstrated for PAG models \citep{jaber2018causal, jaber2019causal}, which is a more practical use of these models. That is, using only observed data and no prior knowledge on the underlying causal relations, some identification and causal queries can be answered.

In this paper, we address the problem of learning a PAG (latent confounders and selection bias may be present) such that interrupting the learning process results in a correct PAG, that is, all the entailed independence and causal relations are correct, although it can be less informative. This anytime property is important in many real-world settings where it is desired to recover as many causal relations as possible under limited available data and compute power.

\section{Related Work}

Causal discovery in the potential presence of latent confounders and selection bias requires placing additional assumptions. In this paper we assume the causal Markov assumption \citep{pearl2009causality}, faithfulness \citep{spirtes2000}, and a DAG structure for the underlying causal relations. In this setting, several causal discovery algorithms have been proposed, FCI \citep{spirtes2000}, RFCI \citep{colombo2012learning}, FCI+ \citep{claassen2013learning}, and GFCI \citep{ogarrio2016hybrid}. Limitations of FCI have been reported previously where it tends to erroneously exclude many edges that are in the true underlying graph, and it requires many conditional independence (CI) tests with large condition sets. The GFCI algorithm employs a greedy score-based approach to improve the accuracy for small data sizes (small sample). However, it requires additional assumptions for justifying the score function that it uses. The RFCI algorithm alleviates computational complexity by avoiding the last stage. This stage requires many CI tests having large condition sets. Although it is sound (outputs correct causal information) it is not complete (some MAGs in the equivalence class can be ruled out given the data).

Similarily to the FCI and FCI+ algorithms, we consider a procedure that is sound and complete in the large sample limit (or when a perfect conditional independence oracle is used).  
However, these algorithms, for a MAG, treat nodes that are m-separated by adjacent nodes differently from nodes that are m-separated by a minimal separating set that includes nodes outside the neighborhood. In contrast, we treat all possible d-separating sets in a similar manner, and employ an iterative procedure, similar to the one used by the PC algorithm \citep{spirtes2000}. This allows our method to be interrupted at any iteration, returning a correct graph, similarly to the anytime FCI algorithm. \citep{spirtes2001anytime}. We call the proposed algorithm iterative casual discovery, ICD.

\section{Anytime Iterative Discovery of Causal Relations}

We will start by describing notations, definitions and assumptions. Then, we will describe the proposed ICD algorithm and prove its correctness. We conclude by discussing its computational complexity.

\subsection{Preliminaries}

A causal DAG, $\cdag$, over nodes $\nodes=\obs\cup\sel\cup\lat$ is denoted by $\cdagset{\obs}{\sel}{\lat}$, where $\obs$, $\sel$, and $\lat$ represent disjoint sets of observed, selection, and latent variables, respectively. We use a ancestral graph \citep{richardson2002ancestral} to model the conditional independence relations among the observed variables $\obs$ in the causal DAG $\cdag$. This class of graphical models is useful since for every causal DAG there exits a unique maximal ancestral graph (MAG). In this setting, our method is aimed at recovering the MAG of the true underlying DAG. 

We assume the causal Markov and faithfulness assumption, and for CI testing we use either a perfect oracle (independence is derived directly from d-separation in the true DAG), or a statistical independence test (e.g., partial correlation) using observed, non-experimental data. In this setting, the MAG cannot be fully recovered and only a Markov equivalence class, represented by a partial ancestral graph (PAG), can be recovered.
In this graph, an invariant edge-mark is denoted by an empty circle `---o'. That is, there exists at least one MAG in the equivalence class that has a tail edge-mark and at least one MAG that has an arrowhead at that edge tip.

\begin{dff}[$\obs$-equivalence \citep{spirtes2000}]\label{dff:o-equiv}
    Two DAGs, $\cdagseti{\obs}{\sel_i}{\lat_i}{i}$ and  $\cdagseti{\obs}{\sel_j}{\lat_j}{j}$ are said to be $\obs$\textbf{-equivalent} if and only if
    $$ \mathbf{X} \indep \mathbf{Y}|(\mathbf{Z}\cup\sel_i) \mathrm{~in~} \cdag_i \iff \mathbf{X} \indep \mathbf{Y}|(\mathbf{Z}\cup\sel_j) \mathrm{~in~} \cdag_j, $$
    for disjoint subsets $\mathbf{X}$, $\mathbf{Y}$, and $\mathbf{Z}$ of $\obs$. 
\end{dff}
That is, the observed d-separation relations in two $\obs$-equivalent DAGs, $\cdag_i$ and $\cdag_j$, are identical. \citet{spirtes2001anytime} defines equivalence considering an $n$-\textbf{oracle} for testing conditional independence. It returns ``dependence'' if the condition set size is larger than $n$, otherwise d-separation is tested and returned. Thus, $n$-$\obs$-\textbf{equivalence} class is defined by using the $n$-\textbf{oracle} instead of d-separation in \dffref{dff:o-equiv}.

In an MAG $\MAG$, node $X$ is in $\DSEPset{A}{B}$ if and only if $X \neq A$ and there is an undirected path between $A$ and $X$ such that every node on the path, except the endpoints, is a collider and is an ancestor of $A$ or $B$ \citep{spirtes2000}. \citet{spirtes2000} defined a super-set of D-Sep that can be determined from the skeleton of the PC algorithm and its identified v-structures. This super-set, denoted $\PDSEPset{A}{B}$ (note the capitalization of `D'), includes all the nodes connected by a path to $A$, where every node on this path, except the end-points, is a collider or part of a triangle, hiding it orientation. 
It is important to note that this definition of $\PDSepop$ assumes that all conditional independence relations $X_i \indep X_j | \boldsymbol{Z}$, such that $\boldsymbol{Z}\in \parents{\{X_i,X_j\}}{\MAG}$ are identified.
A definition of a smaller super-set of $\DSepop$, called $\PDSeppathop$, was given by \citet{colombo2012learning}, where there is a path connecting every node $Z\in\PDSeppathop(A, B)$ to $B$. We take special interest in the path that connects any $Z\in\PDSEPset{A}{B}$ to $A$ and define it as follows.

\begin{dff}[PDS-path]\label{dff:pds-path}
    A possible-D-Sep-path (PDS-path) from $A$ to $Z$ with respect to $B$, denoted $\PDSPath{A}{B}{Z}$, is an undirected path $\langle A,\ldots,Z\rangle$ such that
    \begin{enumerate}
        \item for every sub-path $\langle U,V,W \rangle$ of $\PDSPath{A}{B}{Z}$, $V$ is a collider or $\{U, V, W\}$ forms a triangle, and
        \item $W$ is on a path between $A$ and $B$.
    \end{enumerate}
    
\end{dff}

\subsection{Proposed Algorithm}

We propose a single step that is called iteratively, for recovering the underlying equivalence class, represented by a PAG. Each iteration is parameterized by $r$ ($r\in \{0, \ldots, |\obs|-1\}$), and given a PAG returned by the previous iteration with $r-1$, each pair of connected nodes $A$ and $B$ are tested for independence conditioned on a set $\mathbf{Z}\subset\obs$, such that
\begin{enumerate}
    \item $|\mathbf{Z}|=r$, and
    \item 
    the length of the shortest PDS-path from $Z\in\mathbf{Z}$ to either $A$ or $B$ is less than $r$.
\end{enumerate}

By parameter $r$, we bind the condition set size to its distance from the tested nodes. That is, the condition set size is bounded by the shortest PDS-path length connecting its nodes to the tested nodes ($A$ and $B$). Only nodes that are at most $r$ edges away, on a PDS-path, from $A$ or $B$ are considered.
Then, identified v-structures are oriented and the iteration concludes by repeatedly applying orientation rules until no more edges can be oriented \citep{spirtes2000, zhang2008completeness}. The full procedure is given in \algref{alg:Alg}

For the first iteration, $r=0$ and condition sets of CI tests are empty. In the second iteration $r=1$ and only nodes that are adjacent to the tested nodes are included in the condition set (PDS-path length limit is one edge). Only in succeeding iterations nodes that are not adjacent to the tested nodes may be included.

\begin{algorithm}
\SetKwInput{KwInput}{Input}                
\SetKwInput{KwOutput}{Output}              
\SetKw{Break}{break}
\DontPrintSemicolon
  
  \BlankLine
  
  \KwInput{
    \\\quad $\pag$: Initial PAG over observed variables $\obs$ (default: $\pag$ is fully connected with o---o edges)
    \\\quad $r_0$: Initial condition set size for CI tests (default: $r_0=0$)
    \\\quad $n$: Desired n-representing PAG, largest condition set size to use (default: $n=|\obs|-2$)
    \\\quad $\sindep$: a conditional independence oracle
    }
    
    \BlankLine
    
  \KwOutput{\\\quad A PAG for $n$-$\obs$-equivalence class}

  \SetKwFunction{FMain}{Main}
  \SetKwFunction{FIter}{CDIteration}
  \SetKwFunction{FPDSepRange}{PDSep\_r}

\BlankLine\BlankLine

  \SetKwProg{Fn}{Function}{:}{}
  \Fn{\FMain{$\pag$, $r_0$, $n$}}{
        \For{$r$ ~in~ $\{r_0, \ldots, n\}$}{
            $(\pag, done) \leftarrow$ \FIter($\pag$, $r$)\;
            \If{$done=\mathrm{True}$}{\Break}
        }
        \KwRet $\pag$\;
  }

\BlankLine\BlankLine

  \SetKwProg{Fn}{Function}{:}{\KwRet}
  \Fn{\FIter{$\pag$, $r$}}{
        $done \leftarrow \text{True}$\;
        \For{$(X\text{\textasteriskcentered---\textasteriskcentered} Y)$ ~in~ $\Edges(\pag)$}{
            $\{\mathbf{Z}_i\}_{i} \leftarrow$ \FPDSepRange($X$, $Y$, $\pag$, r)\Comment*{returns sets of $r$ nodes in range $r$}

            \If{$\{\mathbf{Z}_i\}_{i} \neq \emptyset$}{
                  $done \leftarrow \text{False}$\; 

                \For{$\mathbf{Z'}$ ~in~ $\{\mathbf{Z}_i\}_i$}{
                
                    \If{$\sindep(X, Y| \mathbf{Z'})$}{
                        remove edge $(X\text{\textasteriskcentered---\textasteriskcentered} Y)$ from $\pag$\;
                        \Break\;
                    }
                }
            }
        }
        orient edges in $\pag$\;

        \KwRet $(\pag, done)$\;
  }
  \caption{Anytime Iterative Discovery of Causal Relations (ICD algorithm)}
  \label{alg:Alg}
\end{algorithm}

It is important to note that the result of \FPDSepRange($X$, $Y$, $\pag$, r), in \algref{alg:Alg}-line 10, is an ordered set of possible separating sets $\{\mathbf{Z}_i\}_i$, where each $\mathbf{Z}_i$ contains exactly $r$ nodes. The order in which these sets are assigned to $\mathbf{Z}'$ in \algref{alg:Alg} is crucial for reducing the total number of CI test that are performed. A trivial example for an inefficient order is having a condition set consisting of only parents of $A$ or $B$ last in $\{\mathbf{Z}_i\}_i$. We propose ordering this set according to the average of the shortest PDS-path lengths connecting each node in $\mathbf{Z}_i$. That is, for a $\mathbf{Z}\subset\PDSeppathop(X,Y)$, the following value is calculated,

\begin{equation}
    \hat{d}_X (\mathbf{Z}) = \frac{1}{|\mathbf{Z}|} \sum_{W\in\mathbf{Z}} \min(|\PDSPath{X}{Y}{W}|).
\end{equation}

Then, the possible separating sets, $\mathbf{Z}_i$ are ordered according to this value. Nevertheless, note that the correctness of the algorithm is invariant to this order.

\subsection{Correctness}

Our proof relies on the following Lemmas.

\begin{lem}\label{lem:pds-path}
    Let $\pag$ be a PAG n-representing $\cdagset{\obs}{\sel}{\lat}$. If $A \indep B | [\mathbf{Z}]$, for $A,B\in\obs$ and $\mathbf{Z} \subset \obs$, then any node $V\in\obs$ on the shortest PDS-path in $\pag$ from either $A$ or $B$ to $Z \in \mathbf{Z}$ is also in $\mathbf{Z}$.
\end{lem}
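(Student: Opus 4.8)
The plan is to argue by contradiction. Suppose that for some $Z\in\mathbf{Z}$ the shortest PDS-path $\pi=\langle A=W_0,W_1,\ldots,W_m=Z\rangle$ in $\pag$ contains an internal node $V=W_j$ with $V\notin\mathbf{Z}$; by the symmetry of \dffref{dff:pds-path} in $A$ and $B$ it is enough to treat the endpoint $A$. I read the bracket notation $A\indep B|[\mathbf{Z}]$ as the assertion that $\mathbf{Z}$ is a \emph{minimal} separating set for $A$ and $B$, which is essential here---for a non-minimal conditioning set the closure property simply fails (one can pad a separating set with an inactive collider that then lies on a short PDS-path). The goal is to derive a contradiction by either exhibiting an m-connecting path between $A$ and $B$ given $\mathbf{Z}\cup\sel$ in some MAG of the $n$-$\obs$-equivalence class represented by $\pag$, or by showing that a proper subset of $\mathbf{Z}$ already separates $A$ and $B$.

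I would assemble the argument from two ingredients. From minimality: since $Z$ cannot be dropped, there is an $A$--$B$ path $q$ active given $\mathbf{Z}\setminus\{Z\}$ on which $Z$ occurs as a non-collider (adding $Z$ to the conditioning set is what blocks $q$). From \dffref{dff:pds-path}: along $\pi$, every consecutive triple $\langle W_{i-1},W_i,W_{i+1}\rangle$ is a collider at $W_i$ or a triangle, and each interior $W_i$ lies on an $A$--$B$ path. Using the standard realization argument behind the soundness of Possible-D-Sep, I would select a MAG $\MAG$ in the equivalence class in which the circle marks and the orientation-ambiguous triangle chords along $\pi$ are resolved favorably, so that the realized image of $\pi$ is a path from $A$ to $Z$ whose interior nodes (those not bypassed through a triangle chord) are genuine colliders that are ancestors of an endpoint. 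Splicing the reversed realized path onto $q$ at the vertex $Z$ and reducing the concatenation to a simple path gives a candidate m-connecting $A$--$B$ path: the colliders inherited from $q$ are ancestors of $\mathbf{Z}\setminus\{Z\}$, the vertex $Z$ is turned from a blocking non-collider of $q$ into an interior collider, and the non-colliders of $q$ stay outside $\mathbf{Z}$.

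The main obstacle is the node $V$. On this spliced path $V$ is an interior collider, so the path is m-connecting given $\mathbf{Z}$ only if a descendant of $V$ lies in $\mathbf{Z}$, which is not guaranteed---hence the naive splice does not by itself close the argument, and this is where I expect the real work to be. I would exploit the \emph{shortest}-ness of $\pi$ and clause~2 of \dffref{dff:pds-path} together, via an induction (or minimal-counterexample) on $m$. Analyze $W_{m-1}$, the neighbour of $Z$ on $\pi$: if the triple $\langle W_{m-2},W_{m-1},Z\rangle$ is a triangle then $W_{m-2}$ is adjacent to $Z$ and one tries to shorten $\pi$, contradicting its minimality after checking that both clauses of \dffref{dff:pds-path} survive at the new junction; if instead it is a genuine collider at $W_{m-1}$, then $W_{m-1}$---a collider on a path into $Z\in\mathbf{Z}$---can be forced into $\mathbf{Z}$, for otherwise (using that $W_{m-1}$ lies on an $A$--$B$ path) one builds an $A$--$B$ path active given $\mathbf{Z}$, contradicting the separation, or shows $\mathbf{Z}$ reducible. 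One then recurses on the strictly shorter PDS-path $\langle A,\ldots,W_{m-1}\rangle$ toward the element $W_{m-1}\in\mathbf{Z}$, eventually reaching $V$. Making each reduction rigorous---that rerouting preserves both clauses of \dffref{dff:pds-path}, that the MAG realizing the relevant sub-path lies in the $n$-$\obs$-equivalence class, and that the collider-forcing step really yields an active $A$--$B$ path given the conditioning set in question---is the delicate part; the rest is routine bookkeeping with the m-separation rules and the PAG/MAG correspondence.
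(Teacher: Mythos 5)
Your proposal is a plan rather than a proof: the decisive steps are precisely the ones you yourself flag as ``the delicate part'' and leave open. Concretely, three things are asserted but never established. (i) The collider-forcing step---that $W_{m-1}$, a genuine collider adjacent to $Z$ on $\pi$, must itself lie in $\mathbf{Z}$---is essentially the base case of the lemma itself (the length-two instance of the claim), so invoking it inside the recursion is close to circular unless it is proved independently; your sketch only says one ``builds an $A$--$B$ path active given $\mathbf{Z}$ \ldots or shows $\mathbf{Z}$ reducible'' without constructing either object. (ii) The splice of the realized image of $\pi$ onto the path $q$ at $Z$ produces interior colliders (including $V$) that need descendants in $\mathbf{Z}$ for the concatenation to be m-connecting; you correctly observe that this defeats the naive argument, but the replacement argument is exactly what is missing. (iii) The triangle-shortening step needs clause~2 of \dffref{dff:pds-path} to survive the reroute, and the MAG realizing your chosen resolution of the circle marks along $\pi$ must actually exist in the $n$-$\obs$-equivalence class; both are asserted, not checked. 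As written, the proposal identifies the right obstacles but resolves none of them, so it cannot be accepted as a proof of the statement.

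For comparison, the paper's own proof takes an entirely different and far terser route: it observes only that every prefix of a PDS-path is itself a PDS-path and that every node on it lies in $\PDSeppathop(A,B)$, and from these two closure properties directly asserts that a conditioning set containing $Z$ must also contain every node on the shortest PDS-path to $Z$. That argument makes no use of m-connection and never invokes the minimality of $\mathbf{Z}$, so your instinct that minimality is essential and that the real content lies in an m-separation argument is a fair critique of the paper's proof as much as a description of the gap in your own attempt---but neither text, as written, contains a complete derivation of the claimed closure property.
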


In words: the shortest path connecting a node to $A$ or $B$ consists of a subset of nodes of the minimal separating set (denoted by square brackets). 

\begin{proof}
It is easy to see that a PDS-Path $\PDSPath{A}{B}{Z}$ in a PAG resulting by any iteration of our algorithm has the following properties: 
\begin{enumerate}
    \item every sub-path $\langle A,\ldots, V\rangle$ of $\PDSPath{A}{B}{Z}$ is a PDS-Path $\PDSPath{A}{B}{V}$ (recursion)
    \item every node on $\PDSPath{A}{B}{Z}$ is in $\PDSeppathop(A,B)$
\end{enumerate}

Thus, a condition set that includes node $V$ should also include all the nodes on the shortest PDS-path. 
\end{proof}

\begin{lem}\label{lem:cisize}
    Let $\pag$ be a PAG n-representing $\cdagset{\obs}{\sel}{\lat}$. The number nodes on the shortest PDS-path connecting every node in $\mathbf{Z}$ to $A$ or $B$ is less than n.
\end{lem}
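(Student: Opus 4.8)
The plan is to read the bound off from \lemref{lem:pds-path} together with the elementary fact that a PAG which only $n$-represents the underlying DAG cannot realize a separating set of size larger than $n$. Throughout I work under the hypotheses of \lemref{lem:pds-path}, i.e.\ $A\indep B\mid[\mathbf{Z}]$ in $\pag$. Fix $Z\in\mathbf{Z}$, and among the shortest PDS-path from $A$ to $Z$ and the shortest PDS-path from $B$ to $Z$ take the shorter one; write it as $p=\langle C,u_1,\ldots,u_{k-1},Z\rangle$ with $C\in\{A,B\}$ (the two cases being symmetric).

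First I would apply \lemref{lem:pds-path} to $p$: every node of $p$ other than the endpoint $C$ lies in $\mathbf{Z}$. These nodes are exactly the internal nodes $u_1,\ldots,u_{k-1}$ together with $Z$ itself; they are pairwise distinct since a path visits distinct vertices, and $C$ is not among them because $\mathbf{Z}$ is disjoint from $\{A,B\}$. Hence $\{u_1,\ldots,u_{k-1}\}\cup\{Z\}$ is a set of $k$ distinct elements of $\mathbf{Z}$, so $k\le|\mathbf{Z}|$, and in particular $p$ has at most $|\mathbf{Z}|-1$ internal nodes. Next I would bound $|\mathbf{Z}|$ itself: since $\pag$ $n$-represents $\cdagset{\obs}{\sel}{\lat}$, the separations read off from $\pag$ are those of the $n$-oracle, which reports dependence for every conditioning set of size exceeding $n$; therefore $A\indep B\mid[\mathbf{Z}]$ holding in $\pag$ forces $|\mathbf{Z}|\le n$. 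Combining the two estimates, the shortest PDS-path joining $Z$ to $A$ or $B$ has fewer than $n$ internal nodes (and at most $|\mathbf{Z}|\le n$ edges). Doing this for every $Z\in\mathbf{Z}$ proves the lemma; note this is also exactly what guarantees that iteration $r=|\mathbf{Z}|$ of \algref{alg:Alg} actually assembles and tests the conditioning set $\mathbf{Z}$.

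The counting is routine; the two points that need care are the endpoint bookkeeping and the source of the bound on $|\mathbf{Z}|$. For the first, one must be explicit that $A,B\notin\mathbf{Z}$, so that the $k$ non-$C$ vertices of $p$ genuinely inject into $\mathbf{Z}$, and one must be consistent about whether ``number of nodes on the path'' counts $Z$, since that choice is what turns the crude estimate $\le n$ into the strict $<n$ asserted. For the second, the inequality $|\mathbf{Z}|\le n$ must be obtained from the definition of the $n$-oracle---conditioning sets larger than $n$ are never reported as separating---rather than from any property of the true DAG. I do not expect a substantive obstacle beyond these.
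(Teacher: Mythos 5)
Your proof is correct and takes essentially the same route as the paper's: the paper's one-line proof likewise reads the bound off \lemref{lem:pds-path}, observing that the condition set must contain every node on the shortest PDS-path, so the path length is at most $|\mathbf{Z}|$, which is in turn at most $n$ for an $n$-representing PAG. You merely make explicit the endpoint bookkeeping and the $|\mathbf{Z}|\le n$ step (via the $n$-oracle) that the paper leaves implicit.
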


\begin{proof}
From \lemref{lem:pds-path} it directly follows that the condition set size is at least as the length of the shortest PDS-path.
\end{proof}

\begin{thm}
Let $\MAG(\obs)$ be a MAG for a causal DAG $\cdagset{\obs}{\lat}{\sel}$ and let $\mathrm{Ind}$ be a conditional independence oracle that returns d-separation relation for $\obs$ in $\cdag$. If \algref{alg:Alg} is called with $\mathrm{Ind}$ as the conditional independence oracle, then every casual feature entailed by the returned PAG $\pag$ is also entailed by $\MAG$.
\end{thm}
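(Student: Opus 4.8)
The plan is to prove, by induction on the iteration index $r$, the stronger invariant that the graph $\pag_r$ produced at the end of iteration $r$ of \algref{alg:Alg} \emph{soundly represents} the $r$-$\obs$-equivalence class of $\cdag$: its skeleton contains the skeleton of $\MAG$, and every invariant (non-circle) edge-mark of $\pag_r$ is also present in $\MAG$. The theorem then follows immediately: since a perfect oracle answers every query with the genuine d-separation status in $\cdag$, and at iteration $r$ the algorithm only issues queries whose conditioning set has size at most $r$, $\cdag$ — and hence $\MAG$ — lies in the $r$-$\obs$-equivalence class; therefore every feature entailed by $\pag_r$, being common to all members of that class, holds in $\MAG$, and we take $r$ to be the iteration at which the loop halts or is interrupted.

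The skeleton half of the invariant is straightforward. An edge between $A$ and $B$ is deleted only after $\sindep(A,B\mid\mathbf{Z})$ returns ``independent'' for some $\mathbf{Z}$, which under the perfect oracle means $A$ and $B$ are m-separated by $\mathbf{Z}$ in $\cdag$ and hence non-adjacent in $\MAG$; conversely, if $A$ and $B$ are adjacent in $\MAG$ no subset of $\obs$ m-separates them, so the oracle never reports independence and the edge survives every iteration. Thus all deletions are correct and no true adjacency is lost.

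The orientation half is the heart of the argument. Orientation is recomputed from scratch at the end of each iteration — identified colliders first, then the rules of \citep{zhang2008completeness} — so it suffices to show colliders are oriented soundly, soundness of the remaining rules being standard once the skeleton and the stored separating sets are trustworthy. A collider is placed at $C$ on an unshielded triple $A, C, B$ of $\pag_r$ only when $C$ lies outside the set $\mathbf{Z}$ that witnessed the deletion of the edge between $A$ and $B$. Provided the edges $A$--$C$ and $C$--$B$ are present in $\MAG$, the deletion being genuine gives $A \indep B \mid \mathbf{Z}$ in $\cdag$ with $C \notin \mathbf{Z}$, and a length-two path through a non-collider is active given any conditioning set that excludes its middle node, so $C$ must be a collider on every such path in $\MAG$, forcing the two arrowheads. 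The subtle point is that, because of the anytime restriction, $\pag_r$ may still contain edges absent from $\MAG$ — precisely the pairs whose smallest separating set is larger than $r$ — so I must rule out the case in which one edge of the triple is such an extra edge carrying a wrongly induced arrowhead. This is exactly where \lemref{lem:pds-path} and \lemref{lem:cisize} are used: they guarantee that the size-$r$, PDS-path-bounded search performed in line~10 exhausts every minimal separating set of size at most $r$, so a pair still adjacent after iteration $r$ is either adjacent in $\MAG$ or remains adjacent in every other member of the $r$-$\obs$-equivalence class; in either case the collider orientation is consistent with that whole class, hence with $\MAG$.

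Finally I would set up the induction itself: the base case is $r=r_0$, either the empty-conditioning-set pass ($r_0=0$) or, for $r_0>0$, the assumption that the supplied PAG already represents the $(r_0-1)$-equivalence class; the inductive step uses the invariant for $\pag_{r-1}$ to certify that the paths explored at iteration $r$ really are the PDS-paths of \dffref{dff:pds-path} and that \lemref{lem:cisize} applies with the current $r$. The main obstacle, as flagged above, is controlling these leftover edges and showing they never trigger an unsound mark; everything else reduces to the perfectness of the oracle and the known soundness of the orientation rules.
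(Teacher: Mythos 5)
Your proposal is correct and follows essentially the same route as the paper: both arguments use \lemref{lem:pds-path} and \lemref{lem:cisize} to establish that after iteration $r$ every pair separable by a minimal set of size at most $r$ has been disconnected, and then appeal to the anytime-FCI soundness result of \citet{spirtes2001anytime} for the invariance of the resulting orientations. The only difference is one of detail: you unpack the collider-orientation soundness and the inductive bookkeeping explicitly, whereas the paper delegates that entire portion to the cited anytime-FCI proof.
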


\begin{proof}
\citet{spirtes2001anytime} proposed an anytime algorithm for causal discovery. He proved that a graph constructed only from CI tests having a condition set up to $r$ is correct, although less informative. The skeleton of the graph includes edges only between those nodes that cannot be d-separated conditioned on any subset (not just the parents) having a size bounded by $r$. He proved that relying on such skeleton ensures that any edge-mark that is oriented by identifying a v-structure and by applying orientation rules is invariant (present in any MAG in the equivalence class). It follows from \lemref{lem:cisize} that after iteration $r$ of \algref{alg:Alg}, every edge between nodes that are d-separated by a minimal subset bounded by size $r$ was removed. Thus, from the proof anytime FCI, after refining the skeleton at iteration $r$, any edge-mark that is oriented by identifying a v-structure and by applying orientation rules is invariant is correct.
\end{proof}

\subsection{Complexity and Efficiency Analysis}

The proposed algorithm consists of two main mechanisms for improving efficiency. First, It gradually performs CI tests of increasing condition set sizes, similarly to the PC algorithm. This is done by including all subsets of $\PDSepop$ having a specific size. Secondly, after iteration $r$ all edges are removed between nodes that are m-separated in the underlying MAG by a minimal set of size at most $r$. Thus, we can complete orienting the graph (using v-structures and orientation rules). Moreover, the definition of $\PDSepop$ relies on a strong assumption that the only edges that were removed, are between nodes that are m-separated given their parents in the MAG. In our case, edges between nodes that are m-separated given any minimal subset of size $r$ are removed. This allows considering super-sets of $\DSepop$ that are smaller than $\PDSepop$, resulting in fewer CI tests. We expect this to be more dominant for CI tests with large condition sets.

Finally, we note that the computational complexity of the algorithm is $O(n^{2k})$, where $n$ is the number of nodes and $k$ is the maximal in-degree. This is because for every pair of nodes $A$ and $B$, the number of CI tests is bounded by the number of subsets of $\PDSEPset{A}{B}$ and $\PDSEPset{A}{B}$. That is, a complexity that is polynomial in the number of nodes $n$ given the maximal in-degree $k$.

\section{Experimental Evaluation}

We compare the proposed ICD algorithm to the prominent FCI algorithm \citep{spirtes2000}---both are anytime, sound, and complete. We empirically compare the complexity, run-time and structural accuracy of the algorithms using two types of CI tests: 1) a perfect, always correct, CI oracle, and 2) a statistical independence test evaluated from finite-size data sets. 
We follow a procedure, similar to the one described by \citet{colombo2012learning}, for creating random DAGs and data sets. Specifically, we generate DAGs having $n\in\{15, 20, 25, 35\}$ nodes with a connectivity factor of $\rho=2$. For sampling a random DAG, having $n$ variables and connectivity factor $\rho$, an adjacency matrix $\boldsymbol{A}$ for variables $\obs\cup\lat$ of DAG $\cdagset{\obs}{\lat}{\sel=\emptyset}$ is created by independent realization of $\mathrm{Bernoulli}(\nicefrac{\rho}{(n-1)})$ in the upper triangle. If the resulting DAG is unconnected, we repeat until a connected DAG is sampled.
Then, a weight matrix $\boldsymbol{W}$ for the graph edges is created by sampling from $\mathrm{Uniform}([-2, -0.5]\cup[0.5, 2])$ for each non-zero element in $\boldsymbol{A}$.
Finally, a statistical model is created by setting conditional probabilities $p(X_i|\mathrm{Pa}_{\boldsymbol{A}}(X_i))=\boldsymbol{W}_{(\cdot,i)}\boldsymbol{A}^{\mathrm{T}}_{(\cdot,i)}+\epsilon_i$, where $\epsilon_i\sim\mathcal{N}(0,1)$.
For each DAG, we sample half of the parentless nodes that have at least two children and set them to be latent set ($\lat$). The remaining nodes are the observed set ($\obs$), and selection bias is not simulated ($\sel=\emptyset$).

\subsection{Learning the True PAG using a Perfect Independence Oracle}

The proposed ICD algorithm is proved to recover the PAG that corresponds to the true underlying DAG when using a perfect oracle for the CI tests. A perfect oracle is implemented such that it returns d-separation relations in the true DAG. In this experiment, we measure the number of unique\footnote{We implemented a caching mechanism to ensure that the same CI test is not evaluated and counted more than once.} CI tests and their condition set sizes that are required for recovering the true PAG, and compare it to those that are required by the FCI algorithm.

First, we evaluate the total number of CI tests and run-times for different graph sizes. We randomly generate 25 dags for each of the graphs sizes: 15 nodes, 20 nodes, 25 nodes, and 35 nodes. From \figref{fig:total-ci} (a), we find that compared to FCI, the total number of CI tests required by the proposed method increases more slowly with the graph size the the FCI algorithm (note the logarithmic $Y$-axis). From \figref{fig:total-ci} (b) it is evident that the proposed algorithm requires fewer CI tests compared to FCI, for all the 100 tested graphs, and that this advantage of ICD is more evident for graphs that require a larger number of CI tests. From \figref{fig:total-ci} (c) we note that the ratio between the run-times of FCI and ICD increases with the graph size.

Next, we analyze the condition set sizes of the required CI tests. In many CI tests for real-world applications, the statistical power decreases and the computational complexity grows exponentially with the condition set size. In \figref{fig:ci-cond-size} for each tested graph size, we depict the average number of required CI tests per condition set size. It is evident that the saving in number of CI tests, compared to FCI, increases with the graph size.

Lastly, we provide a visualization for the distribution of the ratio between the number of CI tests of FCI and ICD \figref{fig:ci-vis}. It is evident that the ratio increases with the graph size. Moreover, this distribution is skewed towards larger condition sets and that this skew increases with the graph size.

Finally, we note that in our implementation, the significant reduction in the number of CI tests by ICD compared to FCI, led to a proportional reduction in run-time.

\begin{figure}
  \centering
  \includegraphics[width=0.32\textwidth]{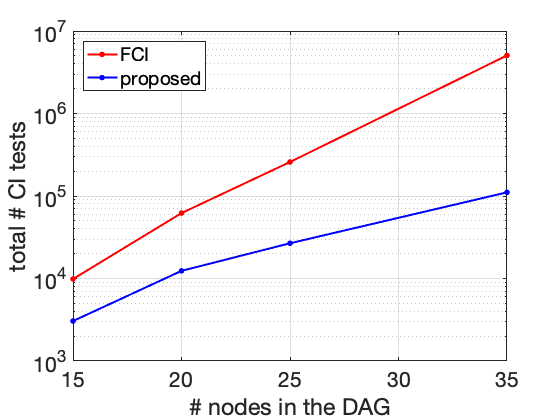}
  \includegraphics[width=0.32\textwidth]{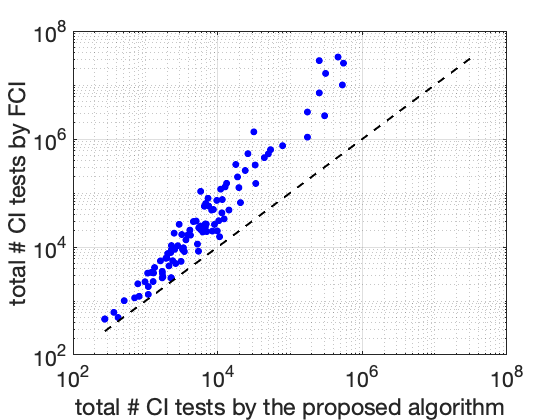}
  \includegraphics[width=0.32\textwidth]{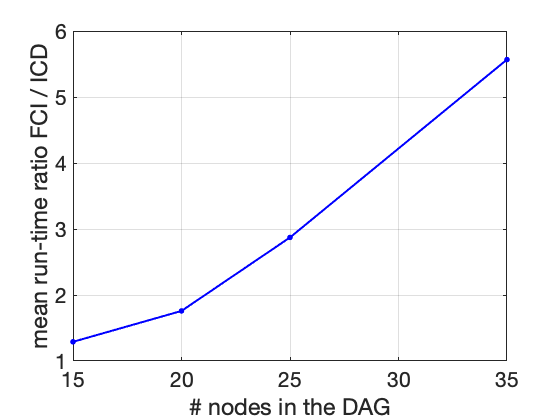}
  \\
  (a)\quad\quad\quad\quad\quad\quad\quad\quad\quad\quad\quad\quad(b)\quad\quad\quad\quad\quad\quad\quad\quad\quad\quad\quad\quad(c)
  \\
  \caption{Total number of CI tests and run-time. (a) Average total number of CI tests as a function of the graph size ($Y$-axis is logarithmic). (b) A scatter plot using all DAGs in the experiment (both $X$ and $Y$ axes are logarithmic). For all the 100 tested DAGs, the proposed ICD algorithm requires fewer CI tests for recovering the true underlying PAG compared to FCI.(c) Mean run-times ratio. Run-time of FCI is divided by the ICD run-time (both are implemented and run on the same platform). \label{fig:total-ci}}
\end{figure}

\begin{figure}
  \centering
  \includegraphics[width=0.245\textwidth]{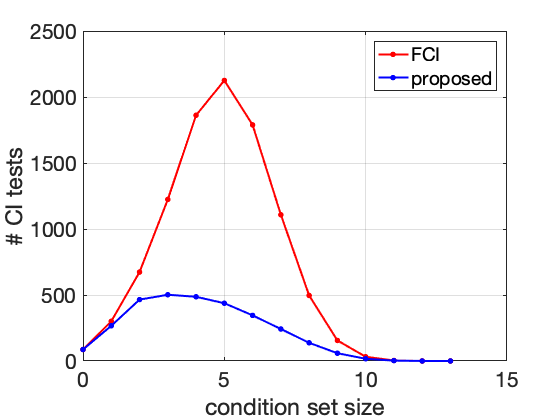}
  \includegraphics[width=0.245\textwidth]{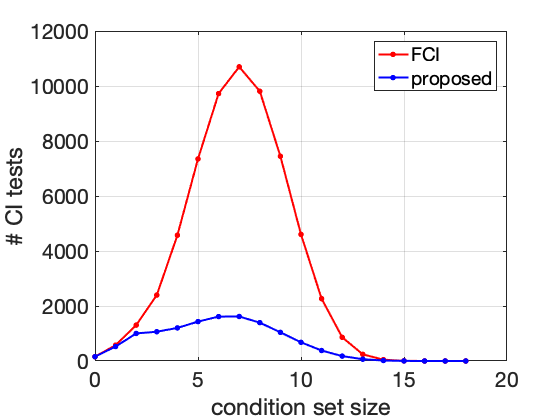}
  \includegraphics[width=0.245\textwidth]{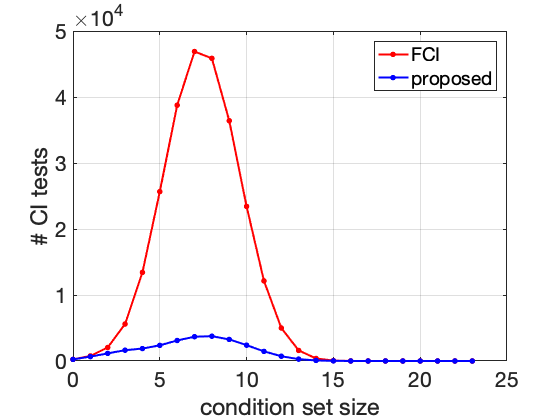}
  \includegraphics[width=0.245\textwidth]{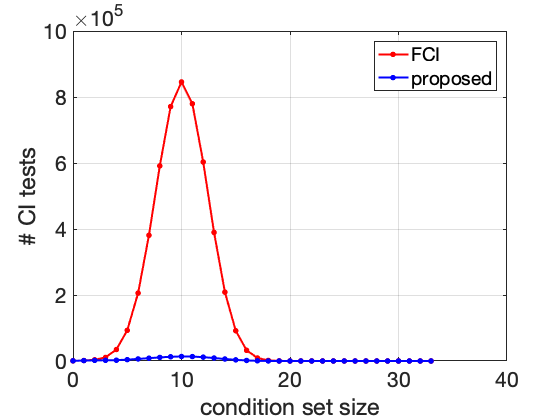}
  \\
  (a)\quad\quad\quad\quad\quad\quad\quad\quad\quad(b)\quad\quad\quad\quad\quad\quad\quad\quad\quad(c)\quad\quad\quad\quad\quad\quad\quad\quad\quad(d)
  \\
  \caption{Average number of CI tests per condition set size for different graph sizes. (a) 15 nodes, (b) 20 nodes, (c) 25 nodes, (d) 35 nodes. The proposed method provides a greater saving in CI tests with large condition sets for larger graphs.\label{fig:ci-cond-size}}
\end{figure}

\begin{figure}
  \centering
  \includegraphics[width=0.245\textwidth]{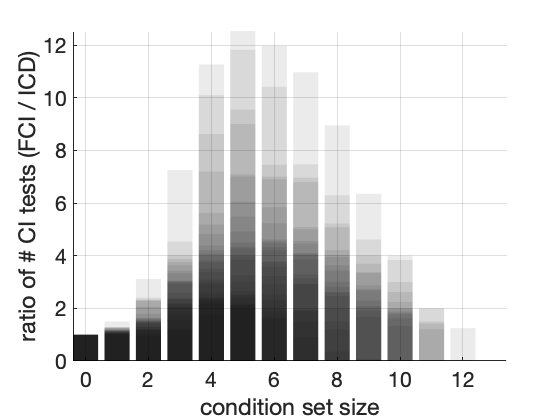}
  \includegraphics[width=0.245\textwidth]{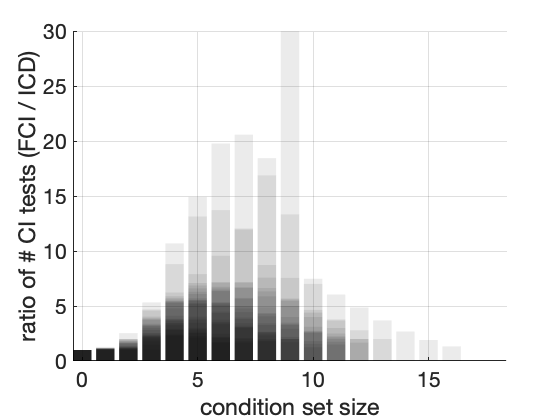}
  \includegraphics[width=0.245\textwidth]{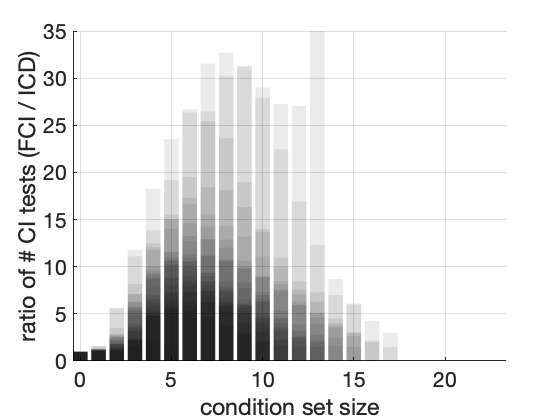}
  \includegraphics[width=0.245\textwidth]{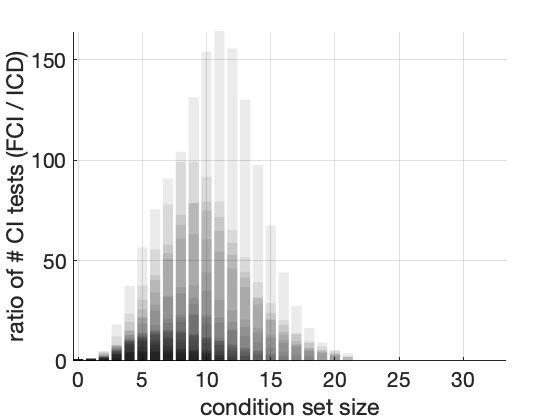}
  \\
  (a)\quad\quad\quad\quad\quad\quad\quad\quad\quad(b)\quad\quad\quad\quad\quad\quad\quad\quad\quad(c)\quad\quad\quad\quad\quad\quad\quad\quad\quad(d)
  \\
  \caption{A visualization of the distribution (shading) of the ratio between the number of CI tests required by FCI and the proposed ICD algorithm. For each DAG in the experiment, a semi-transparent bar-plot is created by dividing the FCI number of CI tests (per condition set size) by the ICD number of CI tests. Greater ratios mean fewer CI tests are required by the proposed method compared to FCI for recovering the true underlying PAG. All plots are layered one on top of the other. Thus, dark areas are proportional to high probability for that condition set size at that ratio. Number of graph nodes: (a) 15 nodes, (b) 20 nodes, (c) 25 nodes, (d) 35 nodes. It can be viewed that the distribution is skewed (the shading decreases more gradually) towards larger condition sets.\label{fig:ci-vis}}
\end{figure}

\subsection{Learning a PAG from Observed Data}

In this experiment, we use the partial-correlation for testing conditional independence (CI) at $\alpha=0.01$, and examine two aspects: accuracy of the learned PAG and  the required number of CI tests. For evaluating partial correlation we use data (samples for the observed variables $\obs$) in various sizes, $\ell \in [10, 1000]$. For each data size, we randomly sample $1000$ DAGs, each having 15 nodes. We learn the PAGs using the proposed ICD algorithm and the FCI algorithm (both are sound and complete).

\textbf{Required Number of CI Tests}. We analyze the total number CI tests required by the proposed algorithm compared to the FCI algorithm. First, using 1000 learned DAGs, we compute the empirical cumulative distribution function $F$ (CDF) for both algorithms on different data sizes. The empirical CDFs for data sizes $100$, $200$, $500$, and $1000$ is depicted in \figref{fig:cdf}. We mark $F(t')=0.9$ with dashed horizontal line and with vertical line we mark $t'$ in $90\%$. That is, in $90\%$ of the tests (learned DAGs) the algorithm required at most $t'$ CI tests. For data sizes $100$, $200$, $500$, and 1000, the ratios between $t'$ for our proposed method and FCI are $1.16$, $1.75$, $2.76$, and $3.75$, respectively. It is evident that the ratio increases rapidly for small data sets and starts to converge to some value (the ratio when using perfect CI oracle). 
We then evaluate statistical significance using the 2-sample Kolmogorov–Smirnov test. The p-values and the statistic are given in \figref{fig:kstest} (a) and (b), respectively. The value of the Kolmogorov–Smirnov statistic for $\ell\geq 40$ is consistently high compared to its value for smaller data sizes. We conclude that for $\ell\geq40$ the CDFs of the number of CI tests is different with statistical significance at $\alpha=0.05$. Finally, In \figref{fig:kstest} (c) we plot the number of CI tests as a function of data set size.

\textbf{Structural Accuracy}. We measure three types of structural errors: (`extra-edges') the number of edges present in the learned structure but are not in the true PAG, (`missing-edges') the number of edges that are in the true PAG but are missing from the learned PAG, and ('wrong edge-marks') the number of edge-marks in the learned graph that are different in the true PAG (for edges that exist in both graphs). Compared to the FCI algorithm and for $100 \leq \ell \leq 1000$, the proposed method had fewer, (up to $10\%$ for 15 nodes, and $18\%$ for 35 nodes), `missing-edges' and a similar a number of 'extra-edges' and 'wrong edge-marks'. We report the ratio in total number of structural errors for graphs having 15 nodes, as a function of the data set size in \figref{fig:kstest} (d). 

\begin{figure}
  \centering
  \includegraphics[width=0.245\textwidth]{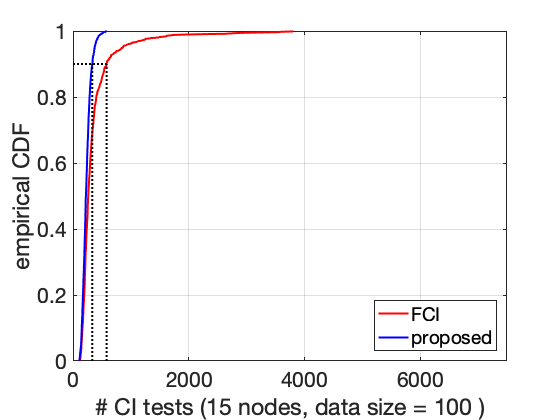}
  \includegraphics[width=0.245\textwidth]{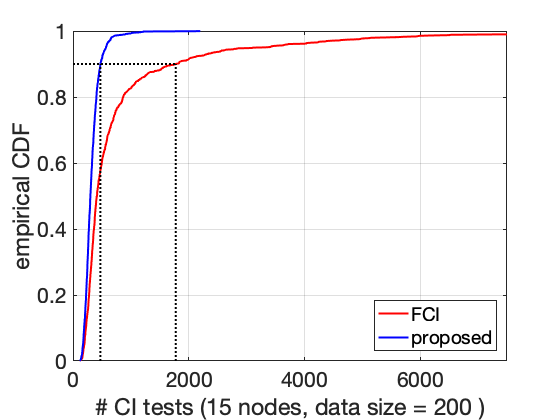}
  \includegraphics[width=0.245\textwidth]{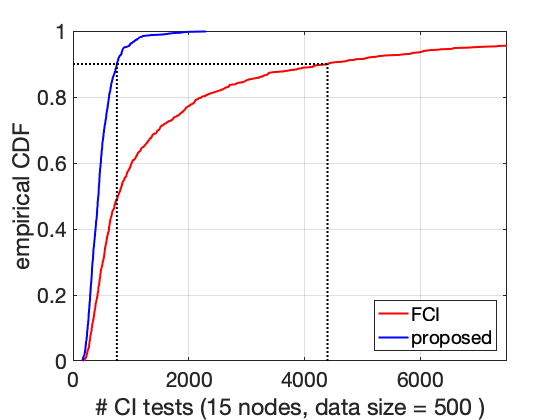}
  \includegraphics[width=0.245\textwidth]{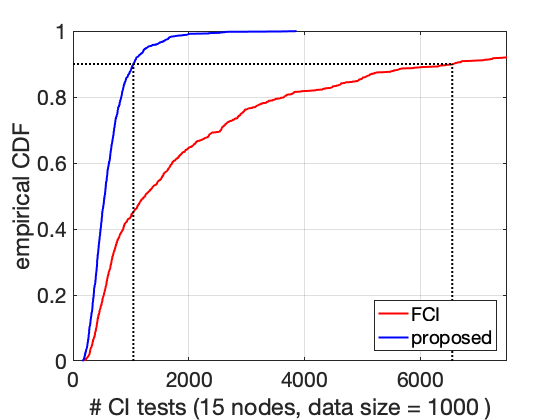}
  \\
  (a)\quad\quad\quad\quad\quad\quad\quad\quad\quad(b)\quad\quad\quad\quad\quad\quad\quad\quad\quad(c)\quad\quad\quad\quad\quad\quad\quad\quad\quad(d)
  \\
  \caption{The empirical cumulative distribution function $F(t)$ (CDF) for the number of CI tests $t$. Evaluated by 1000 randomly generated DAGs. Horizontal line indicates $F(t')=0.9$, vertical lines indicate the corresponding value $t'$. That is, $90\%$ of experiment resulting in number of CI tests fewer than $t'$. Data size used for calculating CI tests: (a) 100, (b) 200, (c) 500, (d) 1000, and the ratios between $t'$ for our proposed method and FCI are 1.16, 1.75, 2.76, and 3.75, respectively.\label{fig:cdf}}
\end{figure}

\begin{figure}
  \centering
  \includegraphics[width=0.245\textwidth]{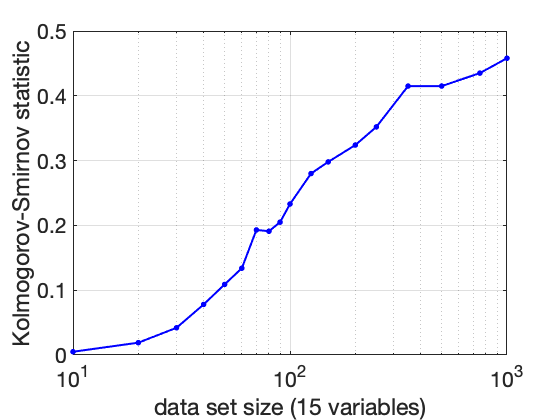}
  \includegraphics[width=0.245\textwidth]{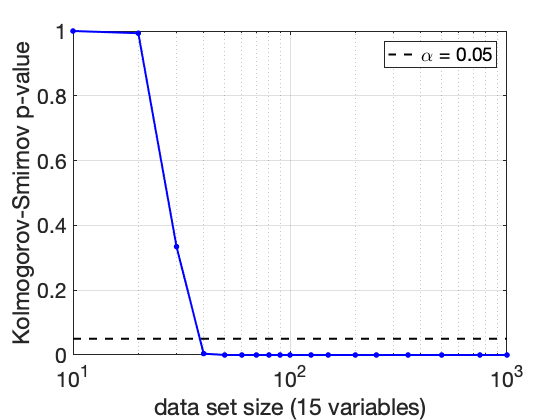}
  \includegraphics[width=0.245\textwidth]{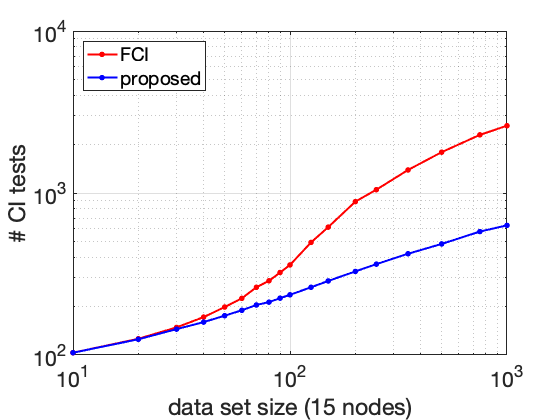}
  \includegraphics[width=0.245\textwidth]{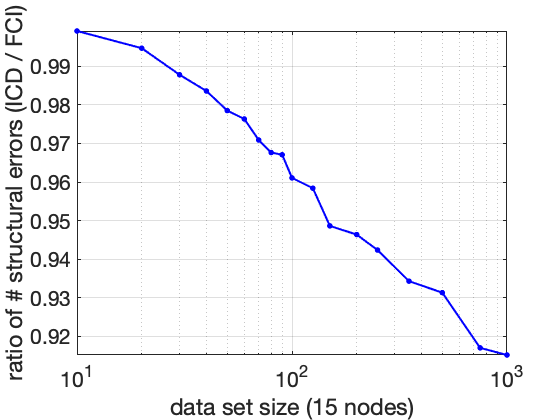}
  \\
  (a)\quad\quad\quad\quad\quad\quad\quad\quad\quad(b)\quad\quad\quad\quad\quad\quad\quad\quad\quad(c)\quad\quad\quad\quad\quad\quad\quad\quad\quad(d)
  \\
  \caption{Structural error and the difference in the number of CI tests as a function of the data size. The FCI and ICD algorithms learned 1000 randomly generated DAGs with 15 nodes. The Kolmogorov-Smirnov test for comparing the difference in the CDFs of the number of CI tests: (a) statistic, and (b) p-value. (c) The mean number of CI tests. (d) The ratio between the number of structural errors of FCI and ICD. \label{fig:kstest}}
\end{figure}

\section{Conclusions}

We proposed an anytime, sound, and complete causal discovery algorithm that consists of a single step that is applied iteratively. In every iteration, the skeleton is sought to be refined and edges are oriented using the same orientation rules used by the FCI algorithm. From the experimental results, the proposed method requires significantly fewer CI tests compared to FCI (along with a small improvement in accuracy), especially for large condition sets. This is reflected in significantly shorter run-times.

One important difference of the proposed ICD algorithm from FCI and its related algorithms is that, right from the outset, it considers nodes for the condition set, that are not in the neighborhood of the tested nodes.
One might suspect that this could result in a high number of CI tests evaluated by the proposed algorithm compared to FCI. However,
the reliance on complete orientation, which is missing in FCI, enables the proposed algorithm to reduce the number of nodes to consider for the condition sets.

We proposed an ordering for the execution of CI tests when testing independence between a pair of nodes. This ordering is proportional to the distance of the condition set from the tested pair. Although the correctness of our algorithm is invariant to this ordering, it is enables further reducing the number of CI tests. In our future work we plan to investigate different criteria for ordering.

\newpage

\bibliography{arXiv_ICD_CausalityNeurIPS}

\begin{thebibliography}{19}
\providecommand{\natexlab}[1]{#1}
\providecommand{\url}[1]{\texttt{#1}}
\expandafter\ifx\csname urlstyle\endcsname\relax
  \providecommand{\doi}[1]{doi: #1}\else
  \providecommand{\doi}{doi: \begingroup \urlstyle{rm}\Url}\fi

\bibitem[Pearl(2010)]{pearl2010introduction}
Judea Pearl.
\newblock An introduction to causal inference.
\newblock \emph{The international journal of biostatistics}, 6\penalty0 (2),
  2010.

\bibitem[Spirtes(2010)]{spirtes2010introduction}
Peter Spirtes.
\newblock Introduction to causal inference.
\newblock \emph{Journal of Machine Learning Research}, 11\penalty0
  (May):\penalty0 1643--1662, 2010.

\bibitem[Pearl(2009)]{pearl2009causality}
Judea Pearl.
\newblock \emph{Causality: Models, Reasoning, and Inference}.
\newblock Cambridge university press, second edition, 2009.

\bibitem[Spirtes et~al.(2000)Spirtes, Glymour, and Scheines]{spirtes2000}
Peter Spirtes, Clark Glymour, and Richard Scheines.
\newblock \emph{Causation, Prediction and Search}.
\newblock {MIT} Press, 2nd edition, 2000.

\bibitem[Peters et~al.(2017)Peters, Janzing, and
  Sch{\"o}lkopf]{peters2017elements}
Jonas Peters, Dominik Janzing, and Bernhard Sch{\"o}lkopf.
\newblock \emph{Elements of causal inference}.
\newblock The MIT Press, 2017.

\bibitem[Rohekar et~al.(2018)Rohekar, Gurwicz, Nisimov, Koren, and
  Novik]{rohekar2018bayesian}
Raanan~Y Rohekar, Yaniv Gurwicz, Shami Nisimov, Guy Koren, and Gal Novik.
\newblock {B}ayesian structure learning by recursive bootstrap.
\newblock In \emph{Advances in Neural Information Processing Systems
  (NeurIPS)}, 2018.

\bibitem[Yehezkel and Lerner(2009)]{yehezkel2009rai}
Raanan Yehezkel and Boaz Lerner.
\newblock {B}ayesian network structure learning by recursive autonomy
  identification.
\newblock \emph{Journal of Machine Learning Research (JMLR)}, 10\penalty0
  (Jul):\penalty0 1527--1570, 2009.

\bibitem[Cooper and Herskovits(1992)]{cooper1992bayesian}
Gregory~F Cooper and Edward Herskovits.
\newblock A {B}ayesian method for the induction of probabilistic networks from
  data.
\newblock \emph{Machine learning}, 9\penalty0 (4):\penalty0 309--347, 1992.

\bibitem[Chickering(2002)]{chickering2002optimal}
David~Maxwell Chickering.
\newblock Optimal structure identification with greedy search.
\newblock \emph{Journal of machine learning research}, 3\penalty0
  (Nov):\penalty0 507--554, 2002.

\bibitem[Shimizu et~al.(2006)Shimizu, Hoyer, Hyv{\"a}rinen, and
  Kerminen]{shimizu2006linear}
Shohei Shimizu, Patrik~O Hoyer, Aapo Hyv{\"a}rinen, and Antti Kerminen.
\newblock A linear non-gaussian acyclic model for causal discovery.
\newblock \emph{Journal of Machine Learning Research}, 7\penalty0
  (Oct):\penalty0 2003--2030, 2006.

\bibitem[Hoyer et~al.(2009)Hoyer, Janzing, Mooij, Peters, and
  Sch{\"o}lkopf]{hoyer2009nonlinear}
Patrik~O Hoyer, Dominik Janzing, Joris~M Mooij, Jonas Peters, and Bernhard
  Sch{\"o}lkopf.
\newblock Nonlinear causal discovery with additive noise models.
\newblock In \emph{Advances in neural information processing systems}, pages
  689--696, 2009.

\bibitem[Richardson and Spirtes(2002)]{richardson2002ancestral}
Thomas Richardson and Peter Spirtes.
\newblock Ancestral graph markov models.
\newblock \emph{The Annals of Statistics}, 30\penalty0 (4):\penalty0 962--1030,
  2002.

\bibitem[Jaber et~al.(2018)Jaber, Zhang, and Bareinboim]{jaber2018causal}
Amin Jaber, Jiji Zhang, and Elias Bareinboim.
\newblock Causal identification under markov equivalence.
\newblock In \emph{34th Conference on Uncertainty in Artificial Intelligence},
  pages 978--987. Association for Uncertainty in Artificial Intelligence
  (AUAI), 2018.

\bibitem[Jaber et~al.(2019)Jaber, Zhang, and Bareinboim]{jaber2019causal}
Amin Jaber, Jiji Zhang, and Elias Bareinboim.
\newblock Causal identification under markov equivalence: Completeness results.
\newblock In \emph{International Conference on Machine Learning}, pages
  2981--2989, 2019.

\bibitem[Colombo et~al.(2012)Colombo, Maathuis, Kalisch, and
  Richardson]{colombo2012learning}
Diego Colombo, Marloes~H Maathuis, Markus Kalisch, and Thomas~S Richardson.
\newblock Learning high-dimensional directed acyclic graphs with latent and
  selection variables.
\newblock \emph{The Annals of Statistics}, pages 294--321, 2012.

\bibitem[Claassen et~al.(2013)Claassen, Mooij, and
  Heskes]{claassen2013learning}
Tom Claassen, Joris~M Mooij, and Tom Heskes.
\newblock Learning sparse causal models is not {NP}-hard.
\newblock In \emph{Uncertainty in Artificial Intelligence}, page 172. Citeseer,
  2013.

\bibitem[Ogarrio et~al.(2016)Ogarrio, Spirtes, and Ramsey]{ogarrio2016hybrid}
Juan~Miguel Ogarrio, Peter Spirtes, and Joe Ramsey.
\newblock A hybrid causal search algorithm for latent variable models.
\newblock In \emph{Conference on Probabilistic Graphical Models}, pages
  368--379, 2016.

\bibitem[Spirtes(2001)]{spirtes2001anytime}
Peter Spirtes.
\newblock An anytime algorithm for causal inference.
\newblock In \emph{AISTATS}, 2001.

\bibitem[Zhang(2008)]{zhang2008completeness}
Jiji Zhang.
\newblock On the completeness of orientation rules for causal discovery in the
  presence of latent confounders and selection bias.
\newblock \emph{Artificial Intelligence}, 172\penalty0 (16-17):\penalty0
  1873--1896, 2008.

\end{thebibliography}


\end{document}